\tikzset{
    box/.style={
        rectangle,
        rounded corners,
        draw=black, very thick,
        text width=6.5em,
        minimum height=2em,
        text centered},
    arrow/.style={-{Stealth[]}, very thick}
}
\renewcommand{\bar}[1]{\overline{#1}}
\renewcommand{\tilde}[1]{\widetilde{#1}}
\renewcommand{\hat}[1]{\widehat{#1}}
\newcommand{\Prob}{\mathbb P}
\newcommand{\Probs}{\mathbb{P}_{\theta^*}}
\newcommand{\E}{\mathbb E}
\newcommand{\Es}{\mathbb{E}_{\theta^*}}
\newcounter{relctr} 
\everydisplay\expandafter{\the\everydisplay\setcounter{relctr}{0}} 
\newcommand\labelrel[2]{%
  \begingroup
    \refstepcounter{relctr}%
    \stackrel{\textnormal{(\alph{relctr})}}{\mathstrut{#1}}%
    \originallabel{#2}%
  \endgroup
}
\newif\iflong
\newif\ifsixteen
\begin{document}

\title{\LARGE \bf
Belief Samples Are All You Need For Social Learning}
\author{Mahyar JafariNodeh, Amir Ajorlou, and Ali Jadbabaie
\thanks{This work was supported by ARO MURI W911NF-19-1-0217.The authors are with the Institute for Data, Systems, and Society (IDSS), Massachusetts Institute of Technology (MIT),
Cambridge, MA 02139, USA.  {\tt\small \{mahyarjn, ajorlou,   jadbabaie\}@mit.edu}.}}
\date{\today}

\maketitle

\thispagestyle{plain}
\pagestyle{plain}

\begin{abstract}
In this paper, we consider the problem of social learning, where a group of agents embedded in a social network are interested in learning an underlying state of the world. Agents have 
incomplete, noisy,  and heterogeneous  sources of information, providing them with recurring private observations of the underlying state of the world. Agents can share their learning experience with their peers by taking actions observable to them, with values from a finite feasible set of states. Actions can be interpreted as samples from the beliefs which agents may form and update on what the true state of the world is. Sharing samples, in place of full beliefs, is motivated by the limited communication, cognitive, and information-processing resources available to agents especially in large populations. Previous work (\cite{Rabih_CDC_2020}) poses the question as to whether learning with probability one is still achievable if agents are only allowed to communicate samples from their beliefs. We provide a definite positive answer to this question, assuming a strongly connected network and a ``collective distinguishability'' assumption, which are both required for learning even in full-belief-sharing settings.
In our proposed belief update mechanism, each agent's belief is a normalized weighted geometric interpolation between a fully Bayesian private belief --- aggregating information from the private source --- and an ensemble of empirical distributions of the samples shared by her neighbors over time. By carefully constructing asymptotic almost-sure lower/upper bounds on the frequency of shared samples matching the true state/or not, we rigorously prove the convergence of all the beliefs to the true state, with probability one. 
\end{abstract}

\section{Introduction and related work}
In recent years, there has been a surge in research exploring  mechanisms of belief formation and evolution in large populations, where individual agents have information of varying quality and precision, information exchange is limited and localized, and the sources, reliability, and trustworthiness of information is unclear. 
The body of literature on social learning, particularly within the realm of non-Bayesian models, reveals a nuanced landscape where individual cognitive capabilities, network structures, and the flow of information converge to shape collective outcomes. 

The DeGroot model presented in \cite{612bb50a-4bdd-3a32-b6eb-7837600cc9c4} is a  simple model of consensus formation, where individuals update their beliefs by taking weighted averages of their neighbors' beliefs. This model provided a mathematical framework for analyzing the convergence of beliefs in a network setting.  Authors in \cite{10.1257/mic.2.1.112} have examined how the structure of social networks influences the accuracy of collective belief formation, highlighting the importance of network centrality and the distribution of initial opinions. Conditions under which communities can learn the true state of the world---despite the presence of biased agents--- have been investigated in \cite{10.1093/restud/rdr004}, contributing to our understanding of the robustness of social learning processes to misinformation and bias. \cite{a6bfde8b-88e1-3655-95a5-006c96403970} explored the implications of limited information processing capabilities on social learning outcomes, demonstrating how cognitive constraints can lead to the persistence of incorrect beliefs within networks. The work in
\cite{doi:10.1126/science.aac6076} focused on computational rationality, providing valuable insights into how individuals make decisions under uncertainty by approximating Bayesian inference, relevant for understanding the cognitive underpinnings of social learning. 

More recently, authors in \cite{https://doi.org/10.3982/ECTA14613} offered a comprehensive analysis of non-Bayesian social learning, identifying the fundamental forces that drive learning, non-learning, and mislearning in social networks. 
Another closely related work is \cite{jadbabaie2012}, where agents make recurring private noisy observations of an underlying state of the world and repeatedly engage in communicating their beliefs on the state with their peers. 
Agents use Bayes rule to update their beliefs upon making new observations. 
Subsequently and after receiving her peers' beliefs in each round, each agent then updates her belief to a convex combination of her own belief and those of her peers. It is then shown that under the so called ``collective distinguishability assumption'' and provided a strongly connected communication network, all agents learn the true state with probability one. 

A key behavioral assumption in many approaches to non-Bayesian social learning (including \cite{https://doi.org/10.3982/ECTA14613, jadbabaie2012})
is that agents are capable of repeatedly communicating their full
belief distributions with their peers. 
As pointed out in \cite{Rabih_CDC_2020}, 
decision-makers in large populations are likely not
to satisfy such a cognitive demand, given the limited/costly
communication and information processing resources.
Motivated by such limitations, authors in \cite{Rabih_CDC_2020} pose the question as to whether almost sure learning is achievable if agents are only allowed to communicate samples from their beliefs. They analyze the learning process under a sample-based variation of the model in \cite{jadbabaie2012}, and show that collective distinguishability is not sufficient for learning anymore.\footnote{The potential for mislearning when relaying actions instead of information is also underscored in \cite{10.1257/jep.12.3.151, 571c752f-4235-30d0-a01d-58e2ef1b884d}.
}

In this paper, we contribute to this line of work by 
proposing a framework where agents only communicate samples from their beliefs, and yet learning is achievable with probability one. 
Each agent's belief in our model is a geometric interpolation between a fully Bayesian private belief --- aggregating information from a private source --- and an ensemble of empirical distributions of the actions shared by her neighbors (normalized to add up to 1). By carefully constructing asymptotic almost sure lower/upper bounds on the frequency of the shared actions communicating the true/wrong state, we prove the convergence of all the beliefs to the true state with probability one.

\section{Mathematical model}\label{sec:model}

We consider a set of $n$ agents 
denoted by $[n]=\{1,\dots,n\}$,
who aim to learn 
an underlying state of the world 
$\theta$. This state is a random variable on a probability space $(\Omega,\mathcal{F},\mathbb{P})$ and takes values in a 
finite set $\Theta$, and take its size to be $m$ (i.e. $|\Theta| = m$). 

We adopt the same information structure as in \cite{Rabih_CDC_2020}: 
At each time period $t = 1,2, \ldots$ and  conditional on the state
$\theta$, each agent $i$ observes a private signal 
$\omega_{it} \in S_i$ generated by the likelihood function $l_i(\cdot|\theta) \in \Delta_{S_i}$.
Here, the finite set $S_i$ denotes agent $i$'s signal space and $\Delta_{S_i}$ the set of probability measures on $S_i$.
 We denote the profile of each agent's signals by $\omega_{i}^{t} := (\omega_{i1}, \ldots, \omega_{it})$. 
We assume that the observation profiles $\{\omega_{it}\}_{i=1}^{n}$ are independent over time, and that $l_{i}(\omega_i|\theta)>0$ for all $i \in [n]$ and $(\omega_i,\theta)\in S_i\times\Theta$.

It is to be noted that agents, in general, may not be able to identify the true state solely relying on their private observations. This is the case when two states are are observationally equivalent to an agent: Two states $\theta \neq \theta'$ are observationally equivalent to agent $i$ if $l_i(\cdot|\theta)=l_i(\cdot|\theta')$. As a remedy,
agents engage in repeated communication with each other on a social network where they can make state-related observations from their neighbors. The network is a
weighted directed graph paramterized with $(\mathcal{V}, \mathcal{E})$ with adjacency matrix $A=\{a_{ij}\}_{i,j \in [n]^2}$, where the weights $a_{ij}$ are non-negative and $\sum_{j=1}^n a_{ij}=1$. A positive weight $a_{ij}>0$ implies that agent $j$ is a neighbor of agent $i$, and in particular, agent $i$ can observe the action of agent $j$. We show the set of neighbors of agent $i$ with $\mathcal{N}_{i}$. We assume agents have positive self-confidences, that is, the diagonal entries of $A$ are all positive.

In our framework, 
each agent constructs an empirical distribution of their neighbors' actions. We denote agent $j$'s action at time $t$ by $c_{jt}\in \Theta$ and the profile of her actions by $c_{j}^{t} := (c_{j1}, \ldots, c_{jt})$; The indicator function ${\bf 1}_{c_{jt}}(\theta)$ is then equal to 1 if agent $j$ has declared $\theta$ at time $t$ as her opinion. Agents can use their actions as a means to broadcast 
their opinion on which state they find more likely to be the true state of the world to their neighbors. We elaborate on our proposed strategy for taking actions later in this section.
Neighbors of agent $j$ construct an empirical distribution $\hat{\bmu}_{jt} \in \Delta_{\Theta}$ of her actions by taking counts of the times she declares $\theta$ as her opinion/action for each $\theta\in\Theta$. 

For each $\theta\in\Theta$, let $n_{jt}(\theta) := 1+\sum_{\tau=1}^{t} {\bf 1}_{c_{j\tau}}(\theta)$ count how many times agent $j$ takes action $\theta$ up to time $t$. We initialize all counters by $1$. We then normalize the counts to construct what we refer to (with a bit misuse of notation) as the empirical distribution of declared actions for agent $j$:
\begin{align}
\label{eq:empiricals}
\hat{\bmu}_{jt}(\theta):=\frac{n_{jt}(\theta)}{\sum{n_{jt}(\theta')}_{\theta'\in\Theta}}=\frac{n_{jt}(\theta)}{t+m}.
\end{align}

Each agent also holds a private belief $\bmu^{P}_{it}\in \Delta_{\Theta}$ aggregating information from its private source following Bayes update rule:
\begin{equation} \label{eq:personal_update}
    \mu_{it}^{P}(\theta|\omega_{i}^{t}) = \frac{l_{i}(\omega_{it}\mid\theta) \cdot \mu_{it-1}^{P}(\theta|\omega_{i}^{t-1})}{m_{it}(\omega_{it})},
\end{equation}
\begin{equation*}
    m_{it}(\omega_{it})= \sum_{\theta \in \Theta} l_{i}(\omega_{it}|\theta) \mu_{it-1}^{P}(\theta|\omega_{i}^{t-1}).  
\end{equation*}
We initialize the private beliefs to be uniform, i.e., $\bmu_{i0}^{P} = (\frac{1}{m}, \dots, \frac{1}{m})$.
Each agent $i$ then incorporates the empirical distribution of declared opinions of her neighbors $\hat{\bmu}_{jt}$ for all $j\in \mathcal{N}_{i}$ into their private belief $\bmu^{P}_{it}$ to form her belief $\bmu_{it}\in\Delta_{\Theta}$ on what true state of the word is. 
They do so by taking the weighted geometric mean of their private beliefs and the empirical distribution of their neighbors' actions, and normalizing it to add up to 1:
\begin{equation} \label{eq:update_rule}
    \mu_{it}(\theta)  \propto \mu_{it}^{P}(\theta)^{a_{ii}}\times \prod_{j \in \mathcal{N}_{i}} \hat{\mu}_{jt}(\theta)^{a_{ij}}.
\end{equation}
Notice that the weights $a_{ii}$ and $a_{ij}$'s  capture the trust of agent $i$
in her private source of information and her neighbors' declared opinions, respectively. 
Each agent $i$ then takes action $c_{it}$ by drawing a sample from her belief $\mu_{it}$ (i.e. $c_{it}\sim \mu_{it}$), which is subsequently observed by those who are neighboring her.

\section{Model discussion and preliminaries} \label{section:b_d}

The key contribution of this work is to show that as long as the agents can collectively distinguish
the states and the graph is strongly connected, learning
occurs with probability one under our proposed framework. Collective distinguishability
means that for every two different states $\theta$ and $\theta'$, there exists
an agent $i$ such that $l_i(.|\theta)\neq l_i(.|\theta')$. We formally define learning below.
\begin{definition}[\cite{Rabih_CDC_2020}] \label{def:learning}
    Agent $i\in[n]$ learns the true state $\theta^*$ along the sample path
    $w \in \Omega$, if  $\lim\limits_{t \to \infty}\mu_{it}(\theta^*)=1$ at $w$. 
\end{definition}

It proves insightful to elaborate on connections/distinctions of our work with \cite{jadbabaie2012, Rabih_CDC_2020} which study models similar to ours.   
Instead of sharing samples from beliefs, authors in \cite{jadbabaie2012} assume that agents are capable of sharing their full beliefs with their neighbors in each round. Their belief update rule is of the form

\begin{equation} \label{update_rule_pooya}
    \mu_{it+1}(\theta) = a_{ii} \frac{l_i(\omega_{it+1}|\theta)}{m_{it}(\omega_{it+1})} \mu_{it}(\theta) + \sum_{j \in \mathcal{N}_i} a_{ij} {\mu_{jt}}(\theta),
\end{equation}
where 
\begin{equation*}
    m_{it}(\omega)= \sum_{\theta \in \Theta} l_i(\omega|\theta) \mu_{it} (\theta).  
\end{equation*}
They show that under the Collective distinguishability assumption and strongly connected graph, learning occurs with probability one. 

Motivated by the limited communication and cognitive resources available to agents especially in large populations, authors in \cite{Rabih_CDC_2020} pose the question as to whether learning with probability one is still achievable if agents are only allowed to communicate samples from their beliefs. They then analyze the learning process under a sample-based variation of \eqref{update_rule_pooya}:
\begin{equation*} \label{update_rule_rabih}
    \mu_{it+1}(\theta) = a_{ii} \frac{l_i(\omega_{it+1}|\theta)}{m_{it}(\omega_{it+1})} \mu_{it}(\theta) + \sum_{j \in \mathcal{N}_i} a_{ij} 1_{c_{jt}}(\theta).
\end{equation*}
As their main result, they prove that collective
distinguishability is not sufficient for learning in this case.
Our work complements this chain by proposing a framework where agents only communicate samples from their beliefs, and yet learning occurs with probability one. Each agent's belief is a geometric interpolation between a fully Bayesian private belief--aggregating information from a private source-- and an ensemble of empirical distributions of her neighbors' actions, as governed by \eqref{eq:empiricals}-\eqref{eq:update_rule}.\footnote{This is subsequently normalized to add up to 1.}  

\section{Main Results}\label{sec:results}
In this section, we rigorously analyze the belief dynamics governed by \eqref{eq:empiricals}-\eqref{eq:update_rule} to establish that learning occurs with probability one, under our proposed framework.

\begin{definition}
    Denoting the true state of the world by $\theta^*$, we say that a state $\theta \in \Theta$ is $\theta^*$-identifiable for agent $i$ if: 
    \[
        \theta \neq \theta^* \rightarrow l_i(.|\theta) \neq l_i(.|\theta^*).
    \]
\end{definition}
We also denote $\Prob_{\theta^{*}}(\cdot):=\Prob(\cdot|\theta^*)$, and $\E_{\theta^*}[\cdot]:=\E(\cdot|\theta^*)$. 

\subsection{Exponentially Fast Decay of the Belief over Identifiable States}
We start by showing the exponential decay of private beliefs on the states identifiable from the true state.
We first introduce the notion of R\'enyi divergence.
\begin{definition}(\textbf{$\alpha$-R\'enyi divergence})
The $\alpha$-R\'enyi divergence between two discrete distributions $P$ and $Q$ is defined as,
\begin{equation*}
    D_{\alpha}(P\|Q):=\frac{1}{\alpha-1}\log\Big(\sum_{i=1}^k p_i^\alpha q_i^{1-\alpha}\Big), \quad \alpha\geq 0.
\end{equation*}
\end{definition}

    Note that $D_{1}(P\|Q)$, that is the R\'enyi divergence for $\alpha=1$,  recovers KL-divergence.

\begin{lemma} \label{lemma:exponential_fast_convergence}
Let $\theta$ be a $\theta^*$-identifiable state for
agent $i \in [n]$. Then, for any $\beta_{i\theta}$ with $ 0<\beta_{i\theta}<D_{1}(l_{i}(.|\theta^*)\|l_{i}(.|\theta))$, there exists $\gamma_{i\theta} := \gamma(\beta_{i\theta}) > 0$ such that for each $t \in \mathbb{N}$ we have:
\begin{equation*}
    \Probs\left(\frac{\mu_{it}^{P}(\theta|\omega_{i}^{t})}{\mu_{it}^{P}(\theta^*|\omega_{i}^{t})} > e^{-\beta_{i\theta} t}\right) \leq e^{-\gamma_{i\theta}t}.
\end{equation*}
\end{lemma}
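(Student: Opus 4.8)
The plan is to unroll the Bayesian recursion \eqref{eq:personal_update} until the likelihood-ratio process becomes a sum of i.i.d.\ terms, and then apply a Chernoff-type large-deviation bound. First I would iterate \eqref{eq:personal_update} together with the uniform initialization $\bmu_{i0}^{P} = (\tfrac1m,\dots,\tfrac1m)$ to get
\begin{equation*}
\frac{\mu_{it}^{P}(\theta\mid\omega_i^t)}{\mu_{it}^{P}(\theta^*\mid\omega_i^t)} \;=\; \prod_{\tau=1}^{t}\frac{l_i(\omega_{i\tau}\mid\theta)}{l_i(\omega_{i\tau}\mid\theta^*)},
\end{equation*}
since the normalizers $m_{i\tau}(\omega_{i\tau})$ cancel in the ratio and the ratio of the initial beliefs is $1$. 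Taking logarithms, the event in the statement is exactly $\tfrac1t\sum_{\tau=1}^{t}X_\tau < \beta_{i\theta}$, where $X_\tau := \log\!\big(l_i(\omega_{i\tau}\mid\theta^*)/l_i(\omega_{i\tau}\mid\theta)\big)$.

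Next I would record the two structural facts that turn this into a genuine large-deviation event. Under $\Probs$ the signals $\omega_{i\tau}$ are i.i.d., hence so are the $X_\tau$; and because $S_i$ is finite and $l_i(\cdot\mid\cdot)$ is strictly positive, each $X_\tau$ takes finitely many values and is in particular bounded, so its moment generating function is finite everywhere. Moreover $\Es[X_\tau] = D_{1}\big(l_i(\cdot\mid\theta^*)\,\|\,l_i(\cdot\mid\theta)\big) =: D$, and $D>0$: identifiability gives $l_i(\cdot\mid\theta)\neq l_i(\cdot\mid\theta^*)$, and KL-divergence vanishes only between identical distributions (Gibbs' inequality). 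By hypothesis $\beta_{i\theta}<D$, so $\epsilon := D-\beta_{i\theta}>0$.

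Finally I would invoke a standard Chernoff-type bound for sums of bounded i.i.d.\ random variables (Cram\'er's theorem, or concretely Hoeffding's inequality). Writing $[a,b]$ for a finite interval containing all possible values of $X_1$, Hoeffding gives, for every $\epsilon>0$,
\begin{equation*}
\Probs\!\left(\frac1t\sum_{\tau=1}^{t}X_\tau \;\le\; \Es[X_1]-\epsilon\right) \;\le\; e^{-2t\epsilon^2/(b-a)^2}.
\end{equation*}
Applying this with $\epsilon = D-\beta_{i\theta}$ yields the lemma with $\gamma_{i\theta} := 2(D-\beta_{i\theta})^2/(b-a)^2>0$, which depends on $\beta_{i\theta}$ (through the gap $D-\beta_{i\theta}$) and on the fixed law of $X_1$, but not on $t$.

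I do not expect a real obstacle here: the argument is a clean reduction to a Chernoff bound. The only points that need care are the telescoping cancellation of the normalizing constants in the likelihood ratio, and the verification that the deviation rate is strictly positive — which is precisely where strict positivity of $D_{1}$ under $\theta^*$-identifiability, and boundedness of $X_\tau$ coming from the finiteness of $S_i$ and positivity of the likelihoods, are used. (One could also replace Hoeffding by the sharper rate function $\sup_{\lambda\le 0}\{\lambda\beta_{i\theta}-\log\Es[e^{\lambda X_1}]\}$ if an optimal exponent is desired.)
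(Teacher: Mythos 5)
Your proof is correct and follows the same basic route as the paper: unroll the Bayesian recursion so the ratio becomes a product of i.i.d.\ likelihood ratios, observe that the event is a large deviation of the empirical mean of $\log\big(l_i(\omega_{i\tau}\mid\theta^*)/l_i(\omega_{i\tau}\mid\theta)\big)$ below its mean $D_1>0$, and close with an exponential tail bound. The only difference is the concentration tool: the paper applies Markov's inequality to the $(1-\alpha)$-th moment of the product, which is the exponentially tilted Chernoff bound and yields the rate $\gamma(\beta)=\max_{\alpha}(1-\alpha)\big(D_\alpha(l_i(\cdot\mid\theta^*)\,\|\,l_i(\cdot\mid\theta))-\beta\big)$ expressed through the R\'enyi divergence (positive by continuity of $D_\alpha$ as $\alpha\to 1$), whereas you use Hoeffding, which needs the boundedness of the log-likelihood ratio (correctly justified via finiteness of $S_i$ and strict positivity of the likelihoods) and gives a generally smaller but still strictly positive exponent; since the lemma only asserts existence of some $\gamma_{i\theta}>0$, either constant suffices.
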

\medskip
\begin{proof}
    Using Equation~\eqref{eq:update_rule} one can write
    \begin{equation*}
        \frac{\mu_{it}^{P}(\theta|\omega_{i}^{t})}{\mu_{it}^{P}(\theta^*|\omega_{i}^{t})} = \left(\prod_{\tau=1}^{t}\frac{l_{i}(\omega_{i\tau}|\theta)}{l_{i}(\omega_{i\tau}|\theta^*)}\right)\cdot\frac{\mu_{i0}^{P}(\theta)}{\mu_{i0}^{P}(\theta^*)}.
    \end{equation*}
    Now by noting the independence of $\{\omega_{i\tau}\}_{\tau=1}^{t}$ due to i.i.d. samples, by Markov inequality, we have
    \begin{align*}
            &\Probs\left(\frac{\mu_{it}^{P}(\theta|\omega_{i}^{t})}{\mu_{it}^{P}(\theta^*|\omega_{i}^{t})} > e^{-\beta_{i\theta} t}\right) \leq \frac{ \Es\left[(\frac{l_{i}(\omega_{i}|\theta)}{l_{i}(\omega_{i}|\theta^*)})^{1-\alpha}\right]^{t}}{e^{-(1- \alpha)\beta_{i\theta} t}} \\
            &= \exp\left(-t(1-\alpha)(D_{\alpha}(l_{i}(.|\theta^*)\|l_{i}(.|\theta))-\beta_{i\theta})\right),
    \end{align*}
    where $\alpha \in (0, 1)$, and $D_{\alpha}(P\|Q)$ is $\alpha$-Renyi Divergence between distributions $P, Q$, and 
    $\gamma(\beta)= \max_{\alpha^*}(1-\alpha)(D_{\alpha}(l(.|\theta^*)\|l(.|\theta))-\beta)$ where $\alpha^{*}$ is the set of $\alpha \in (0, 1)$ for which $D_{\alpha}(l(.|\theta^*)\|l(.|\theta)) > \beta_{i\theta}$.
\end{proof}
Lemma \ref{lemma:exponential_fast_convergence} suggests that as long as a state $\theta$ is identifiable for an agent, her belief on $\theta$ decays exponentially fast in time, no matter how misinforming her neighbors are, which is formally stated below.
\begin{lemma}\label{cor:1}
Let $\theta$ be a
$\theta^*$-identifiable state for agent $i \in [n]$. Then, for any
$0<\beta_{i\theta}<D_{1}(l_{i}(.|\theta^*)\|l_{i}(.|\theta))$ there exists $\gamma_{i\theta}> 0$ such that for sufficiently large $t$, we have
\begin{equation} \label{eq:cor1}
    \Probs(\mu_{it}(\theta) \geq e^{-a_{ii}{\beta}_{i\theta} t}) \leq e^{-\gamma_{i\theta}t},
\end{equation}
\end{lemma}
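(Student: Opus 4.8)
The plan is to reduce the statement to Lemma~\ref{lemma:exponential_fast_convergence} by showing that the empirical distributions of the neighbors' actions can inflate $\mu_{it}(\theta)$ by at most a polynomial factor in $t$, which is then absorbed into the exponential rate at the price of a slightly worse constant.

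First I would exploit the normalization in~\eqref{eq:update_rule}: since the normalizing constant is a sum over $\Theta$ that in particular includes the $\theta^*$ term, dividing numerator and denominator gives
\[
\mu_{it}(\theta)\ \le\ \Big(\tfrac{\mu_{it}^{P}(\theta)}{\mu_{it}^{P}(\theta^*)}\Big)^{a_{ii}}\ \prod_{j\in\mathcal{N}_i}\Big(\tfrac{\hat{\mu}_{jt}(\theta)}{\hat{\mu}_{jt}(\theta^*)}\Big)^{a_{ij}}
\ =\ \Big(\tfrac{\mu_{it}^{P}(\theta)}{\mu_{it}^{P}(\theta^*)}\Big)^{a_{ii}}\ \prod_{j\in\mathcal{N}_i}\Big(\tfrac{n_{jt}(\theta)}{n_{jt}(\theta^*)}\Big)^{a_{ij}} .
\]
From the definition of the counters, $n_{jt}(\theta)\le t+1$ and $n_{jt}(\theta^*)\ge 1$, so every empirical ratio is bounded by $t+1$; combined with $\sum_{j\in\mathcal{N}_i}a_{ij}=1-a_{ii}$ and $t+1\ge 1$, this yields $\mu_{it}(\theta)\le \big(\mu_{it}^{P}(\theta)/\mu_{it}^{P}(\theta^*)\big)^{a_{ii}}(t+1)^{1-a_{ii}}$.

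Next I would translate the event of interest. On $\{\mu_{it}(\theta)\ge e^{-a_{ii}\beta_{i\theta}t}\}$ the above inequality forces $\mu_{it}^{P}(\theta)/\mu_{it}^{P}(\theta^*)\ge e^{-\beta_{i\theta}t}(t+1)^{-(1-a_{ii})/a_{ii}}$. Now fix $\beta'$ with $\beta_{i\theta}<\beta'<D_{1}(l_i(.|\theta^*)\|l_i(.|\theta))$ (possible since $\beta_{i\theta}$ is strictly below the divergence). For all sufficiently large $t$ the polynomial factor satisfies $(t+1)^{(1-a_{ii})/a_{ii}}< e^{(\beta'-\beta_{i\theta})t}$, whence the event above is contained in $\{\mu_{it}^{P}(\theta)/\mu_{it}^{P}(\theta^*)> e^{-\beta' t}\}$. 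Applying Lemma~\ref{lemma:exponential_fast_convergence} with $\beta'$ in place of $\beta_{i\theta}$ then gives $\Probs(\mu_{it}(\theta)\ge e^{-a_{ii}\beta_{i\theta}t})\le e^{-\gamma(\beta')t}$, and we set $\gamma_{i\theta}:=\gamma(\beta')>0$.

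The only real subtlety is this polynomial slack $(t+1)^{1-a_{ii}}$ injected by the neighbors' empirical distributions: it is harmless precisely because it can be swallowed by the exponent at the cost of nudging $\beta_{i\theta}$ up to some $\beta'$ still below the divergence, and this is exactly why the conclusion is only asserted for sufficiently large $t$ rather than for all $t$ as in Lemma~\ref{lemma:exponential_fast_convergence}. The rest is bookkeeping: the geometric-mean update collapsing to a product of ratios, and the positivity of the self-weight $a_{ii}$ ensuring the claimed exponent $a_{ii}\beta_{i\theta}$ is strictly positive.
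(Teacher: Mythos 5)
Your proof is correct and follows essentially the same route as the paper's: bound $\mu_{it}(\theta)$ by the ratio $\mu_{it}(\theta)/\mu_{it}(\theta^*)$, decompose it via the geometric update rule, crush the neighbor terms with the worst-case estimate $n_{jt}(\theta)/n_{jt}(\theta^*)\le t+1$, and invoke Lemma~\ref{lemma:exponential_fast_convergence}. In fact you are slightly more careful than the paper, which stops at the bound $e^{-a_{ii}\beta_{i\theta}t}(t+1)^{1-a_{ii}}$ and leaves implicit the final step of absorbing the polynomial factor by nudging $\beta_{i\theta}$ up to some $\beta'$ still below the divergence --- the step you spell out explicitly.
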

\begin{proof}
    Using Equation~\eqref{eq:update_rule} we have
    \begin{align}
        \frac{\mu_{it}(\theta)}{\mu_{it}(\theta^*)} &= \left(\frac{\mu_{it}^{P}(\theta)}{\mu_{it}^{P}(\theta^*)}\right)^{a_{ii}}\times \underbrace{\prod_{j \in \mathcal{N}_{i}} \left(\frac{\hat{\mu}_{jt}(\theta)}{\hat{\mu}_{jt}(\theta^*)}\right)^{a_{ij}}}_{\textit{KOO(Knowledge of Others)}} \label{eq:KOO} \\
        &\leq \left(\frac{\mu_{it}^{P}(\theta)}{\mu_{it}^{P}(\theta^*)}\right)^{a_{ii}} \times (t+1)^{1-a_{ii}},\label{eq:ratio_upperbound}
    \end{align}
    where the inequality follows by considering the worst case where the true state is never chosen by the neighbors while $\theta$ has been constantly chosen since the beginning. Now by invoking Lemma \ref{lemma:exponential_fast_convergence}, with probability at least $1-e^{-\gamma_{i\theta}t}$
    \begin{align*}
         \frac{\mu_{it}(\theta)}{\mu_{it}(\theta^*)} \leq e^{-a_{ii}\beta_{i\theta} t} \times (t+1)^{1-a_{ii}},
    \end{align*}
\end{proof}
   
   However, what about the states that agents cannot distinguish from $\theta^*$? The answer lies in the knowledge of other users in the network who possess this capability which is charachterized by KOO term in Equation \eqref{eq:KOO}. In order to utilize this knowledge effectively, we must understand the properties of these users and how their expertise can benefit others. To accomplish this, we need to analyze the empirical distribution of the opinions declared by neighbors, taking into account the frequency of declaring each of $\theta$ and $\theta^*$ as their action, encapsulated in the parameters $n_{it}(\theta)$ and $n_{it}(\theta^*)$ which denote the number of times each of $\theta$ and $\theta^*$ are chosen by agent $i$ up to time $t$.

\subsection{The frequency of declaring true state}
In this section we investigate the $\hat\mu_{jt}(\theta^*)$ component of KOO term which is capturing the frequency of neighbors declaring true state $\theta^*$ as their opinion.

Lemma \ref{cor:1} was proved using the worst case lower bound $n_{it}(\theta^*)\geq 1$ (i.e. users don't take $\theta^*$ as their action up to time $t$), which is clearly an underestimation. To refine this, we must derive a non-trivial lower bound on the number of instances in which users select the true state $\theta^*$ as their action. 
We first derive a lower bound on the belief of agents on $\theta^*$, and will subsequently use it to approximate the number of declared opinions matching the true state using some concentration inequalities. 
\begin{lemma} \label{lem:true_state_lowerbound}
    For any agent $i \in [n]$, there exists $\gamma_i>0$ such that for all $t\in\mathbb{N}$ we have:
    \begin{equation*}
        \Probs\left(\mu_{it}(\theta^*|\omega_{i}^{t}) \leq \frac{1}{m(t+1)^{1-a_{ii}}}\right) \leq e^{-{\gamma}_{i}t}.
    \end{equation*}
\end{lemma}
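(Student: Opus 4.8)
The plan is to convert the claimed lower bound on $\mu_{it}(\theta^*)$ into a high-probability \emph{upper} bound on every likelihood-type ratio $\mu_{it}(\theta)/\mu_{it}(\theta^*)$. Dividing \eqref{eq:update_rule} by $\mu_{it}(\theta^*)$ and summing over $\theta$ gives the identity
\[
\frac{1}{\mu_{it}(\theta^*)}\;=\;\sum_{\theta\in\Theta}\frac{\mu_{it}(\theta)}{\mu_{it}(\theta^*)},
\]
so it suffices to exhibit an event of probability at least $1-e^{-\gamma_i t}$ on which $\mu_{it}(\theta)/\mu_{it}(\theta^*)\le (t+1)^{1-a_{ii}}$ for every $\theta\in\Theta$ simultaneously. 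On the factorization \eqref{eq:KOO}, each such ratio is a private part raised to the power $a_{ii}$ times the KOO term, and the KOO term admits the same \emph{deterministic} worst-case bound used in Lemma~\ref{cor:1}: since $n_{jt}(\theta)\le t+1$ and $n_{jt}(\theta^*)\ge 1$, one has $\prod_{j\in\mathcal{N}_i}(\hat\mu_{jt}(\theta)/\hat\mu_{jt}(\theta^*))^{a_{ij}}\le (t+1)^{\sum_{j\in\mathcal{N}_i}a_{ij}}=(t+1)^{1-a_{ii}}$. Thus the whole problem reduces to controlling the private ratio $\mu_{it}^{P}(\theta)/\mu_{it}^{P}(\theta^*)$, and, crucially, the neighbors' behavior plays no role whatsoever.

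The new point relative to Lemma~\ref{cor:1} is that $\theta$ need not be $\theta^*$-identifiable for agent $i$, so Lemma~\ref{lemma:exponential_fast_convergence} is not directly applicable. I would split $\Theta\setminus\{\theta^*\}$ accordingly. If $\theta$ is \emph{not} $\theta^*$-identifiable for $i$, then $l_i(\cdot|\theta)=l_i(\cdot|\theta^*)$, and since the private beliefs are initialized uniformly, a one-line induction on the Bayes update \eqref{eq:personal_update} shows $\mu_{it}^{P}(\theta)=\mu_{it}^{P}(\theta^*)$ for all $t$, surely; hence the private ratio is $\equiv 1$ for these states. If $\theta$ \emph{is} $\theta^*$-identifiable, fix any $\beta_{i\theta}\in\big(0,D_1(l_i(\cdot|\theta^*)\|l_i(\cdot|\theta))\big)$; Lemma~\ref{lemma:exponential_fast_convergence} produces $\gamma_{i\theta}>0$ with $\Probs\big(\mu_{it}^{P}(\theta)/\mu_{it}^{P}(\theta^*)>e^{-\beta_{i\theta}t}\big)\le e^{-\gamma_{i\theta}t}$, and since $e^{-\beta_{i\theta}t}\le 1$ this already means $\mu_{it}^{P}(\theta)\le\mu_{it}^{P}(\theta^*)$ off an event of probability at most $e^{-\gamma_{i\theta}t}$.

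Finally I would union-bound over the at most $m-1$ identifiable states: off an event of probability at most $(m-1)e^{-\bar{\gamma}_i t}$ with $\bar{\gamma}_i=\min_\theta\gamma_{i\theta}$, every private ratio is $\le 1$, so by the factorization every $\mu_{it}(\theta)/\mu_{it}(\theta^*)\le (t+1)^{1-a_{ii}}$, whence the displayed identity gives $1/\mu_{it}(\theta^*)\le m(t+1)^{1-a_{ii}}$, i.e.\ exactly the asserted bound. Choosing $\gamma_i$ slightly below $\bar{\gamma}_i$ absorbs the factor $m-1$ for $t$ beyond a fixed threshold, and for the finitely many smaller $t$ the estimate $\Probs(\cdot)\le e^{-\gamma_i t}$ can be made to hold trivially by shrinking $\gamma_i$ further (the probability is at most one); similar routine care handles strict versus non-strict inequalities. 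I do not expect a substantive obstacle: the argument is a sure worst-case bound on the update \eqref{eq:update_rule} combined with an exponential tail for agent $i$'s own signals, and the one genuinely necessary idea is to peel off the non-$\theta^*$-identifiable states, on which the private belief already tracks $\theta^*$ exactly.
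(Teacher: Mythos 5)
Your proposal is correct and follows essentially the same route as the paper's proof: a deterministic worst-case bound on the KOO term, a split of $\Theta\setminus\{\theta^*\}$ into non-$\theta^*$-identifiable states (where the private ratio is identically $1$) and identifiable ones (handled by Lemma~\ref{lemma:exponential_fast_convergence}), followed by a union bound. Your explicit summation identity $1/\mu_{it}(\theta^*)=\sum_{\theta}\mu_{it}(\theta)/\mu_{it}(\theta^*)$ and the remark on absorbing the factor $m$ into $\gamma_i$ merely make explicit steps the paper leaves implicit.
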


\begin{proof}
    Reversing the inequality in Equation \eqref{eq:ratio_upperbound} we have:
    \begin{equation*}
        \frac{\mu_{it}(\theta^*)}{\mu_{it}(\theta)} \geq \left(\frac{\mu_{it}^{P}(\theta^*)}{\mu_{it}^{P}(\theta)}\right)^{a_{ii}} \times 1/(t+1)^{1-a_{ii}}.
    \end{equation*}
    For each non-$\theta^*$-identifiable $\theta$ in the above inequality, 
 we have $\left(\frac{\mu_{it}^{P}(\theta^*)}{\mu_{it}^{P}(\theta)}\right)^{a_{ii}} = 1$, so $\frac{\mu_{it}(\theta^*)}{\mu_{it}(\theta)} > 1/(t+1)^{1-a_{ii}}$. For the rest, by Lemma \ref{lemma:exponential_fast_convergence} we know that with probability at least $1-e^{-\gamma_{i\theta}t}$, the event $\frac{\mu_{it}^{P}(\theta^*)}{\mu_{it}^{P}(\theta)} > e^{a_{ii}\beta_{i\theta} t} > 1$ happens which implies $\frac{\mu_{it}(\theta^*)}{\mu_{it}(\theta)} > 1/(t+1)^{1-a_{ii}}$. By union bound we get 
 \begin{align*}
     &\Probs(\frac{\mu_{it}(\theta^*)}{\mu_{it}(\theta)} > 1/(t+1)^{1-a_{ii}}, \textit{For all} \; \theta \in \Theta) \\ &\geq 1-me^{-\gamma_{i}t}.
 \end{align*}
 Hence, we have \begin{equation*}
     \Probs(\mu_{it}(\theta^*|\omega_{i}^{t}) > \frac{1}{m(t+1)^{1-a_{ii}}}) \geq 1- me^{-\gamma_{i}t}.
 \end{equation*}
 where $\gamma_{i} := \gamma(\min_{\theta\neq\theta^*}\beta_{i\theta})$
\end{proof}
Since the agents are constructing the empirical distributions on a counting manner, we aim to derive at least how many times $\theta^*$ is chosen for large $t$. 
From this point on, we will consider $t$ to be sufficiently large, and the inequalities that will be used would hold for large enough values of $t$. To further formalize this, we have the following Theorem.
\begin{lemma}\label{lem:lower_bound_star}
For each agent $i \in [n]$ there exists an $\alpha>0$, and $T_\alpha \in \mathbb{N}$ such that for all $t \geq T_\alpha$ we have:
\begin{align}
    n_{it}(\theta^*)>(t+1)^{1-\alpha}, 
\end{align}
with probability at least 
$1-e^{-(t+1)^{1-\alpha}}$.

\end{lemma}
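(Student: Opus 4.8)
The plan is to express $n_{it}(\theta^*)-1=\sum_{\tau=1}^{t}\mathbf{1}_{c_{i\tau}}(\theta^*)$ as a sum of indicators whose conditional success probabilities are exactly the scalars $\mu_{i\tau}(\theta^*)$ (since $c_{i\tau}\sim\mu_{i\tau}$), and to feed the almost-sure lower bound on $\mu_{i\tau}(\theta^*)$ from Lemma~\ref{lem:true_state_lowerbound} into a multiplicative Chernoff estimate. Fix $i\in[n]$ and pick $\alpha=\alpha_i$ with $1-a_{ii}<\alpha_i<1$, which is possible because $a_{ii}\in(0,1]$. The point of the choice is that $1-\alpha_i<a_{ii}$: a lower bound on $n_{it}(\theta^*)$ that grows like $(t+1)^{a_{ii}}$ eventually dominates the target $(t+1)^{1-\alpha_i}$, and the exponent gap also makes the failure probability decay faster than $e^{-(t+1)^{1-\alpha_i}}$.

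First I would decouple the actions from the beliefs. Conditionally on the history up to just before the draw, $\mathbf{1}_{c_{i\tau}}(\theta^*)$ is $\mathrm{Bernoulli}(\mu_{i\tau}(\theta^*))$, so on a suitably enlarged probability space we may take i.i.d.\ uniforms $U_\tau\sim\mathrm{Unif}[0,1]$, independent of the history that determines $\mu_{i\tau}$, such that $\mathbf{1}_{c_{i\tau}}(\theta^*)\ge\mathbf{1}\{U_\tau\le p\}$ whenever $\mu_{i\tau}(\theta^*)\ge p$. Define the good event $G_t:=\bigcap_{\tau=\lceil t/2\rceil}^{t}\{\mu_{i\tau}(\theta^*)>\tfrac{1}{m(\tau+1)^{1-a_{ii}}}\}$; by Lemma~\ref{lem:true_state_lowerbound} and a union bound, $\Probs(G_t^c)\le\sum_{\tau\ge\lceil t/2\rceil}C'e^{-\gamma_i\tau}\le C_i\,e^{-\gamma_i t/2}$. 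On $G_t$ every $\tau$ in the window has $\mu_{i\tau}(\theta^*)\ge p_t:=\tfrac{1}{m(t+1)^{1-a_{ii}}}$, hence
\begin{equation*}
    n_{it}(\theta^*)\;\ge\;1+\sum_{\tau=\lceil t/2\rceil}^{t}\mathbf{1}\{U_\tau\le p_t\}\;=:\;1+B_t,\qquad B_t\sim\mathrm{Bin}(n_t,p_t),\ n_t\ge t/2 .
\end{equation*}

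Then I apply the standard multiplicative Chernoff lower-tail bound. Here $\E[B_t]\ge\tfrac{t}{2}p_t\ge\mu_t:=\tfrac{1}{4m}(t+1)^{a_{ii}}$ once $t$ is large, so $\Probs(B_t\le\tfrac12\mu_t)\le e^{-\mu_t/8}$, and since $1-\alpha_i<a_{ii}$ we have $(t+1)^{1-\alpha_i}\le\tfrac12\mu_t$ for $t$ past some threshold. Combining,
\begin{align*}
    \Probs\big(n_{it}(\theta^*)\le(t+1)^{1-\alpha_i}\big)
    &\le \Probs(G_t^c)+\Probs\big(B_t\le(t+1)^{1-\alpha_i}\big) \\
    &\le C_i\,e^{-\gamma_i t/2}+e^{-(t+1)^{a_{ii}}/(32m)} .
\end{align*}
The first summand decays faster than $e^{-(t+1)^{1-\alpha_i}}$ because $\gamma_i t/2$ is linear in $t$ while $(t+1)^{1-\alpha_i}$ is sublinear, and the second because $a_{ii}>1-\alpha_i$; hence for all $t\ge T_\alpha$ (a suitable threshold) the sum is at most $e^{-(t+1)^{1-\alpha_i}}$, which is the claim.

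The main obstacle is the dependence structure: the $\mathbf{1}_{c_{i\tau}}(\theta^*)$ are neither independent nor endowed with deterministic parameters, and Lemma~\ref{lem:true_state_lowerbound} only controls each $\mu_{i\tau}(\theta^*)$ with high probability rather than surely. The coupling to i.i.d.\ uniforms is what lets me pass, on $G_t$, to an honest binomial $B_t$ whose small success probability $p_t$ yields a multiplicative-Chernoff tail of order $e^{-c(t+1)^{a_{ii}}}$ --- strong enough to beat $e^{-(t+1)^{1-\alpha_i}}$ --- rather than the far weaker $e^{-c(t+1)^{2a_{ii}-1}}$ that a variance-free Azuma bound would give (useless when $a_{ii}\le\tfrac12$). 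The one bookkeeping subtlety is to intersect only over the late window $\tau\ge t/2$ in $G_t$: taking all $\tau\ge1$ would leave $\Probs(G_t^c)$ a nonvanishing constant, whereas the late window makes it exponentially small in $t$ while still retaining about $t/2$ time steps --- more than enough to build up a count of order $(t+1)^{a_{ii}}$.
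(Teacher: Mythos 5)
Your proof is correct, and it follows the same core strategy as the first half of the paper's argument: feed the belief lower bound of Lemma~\ref{lem:true_state_lowerbound} into a multiplicative Chernoff estimate to extract a count of order $(t+1)^{a_{ii}}$. Two differences are worth noting. First, you are more careful than the paper about the dependence structure: the indicators $\mathbf{1}_{c_{i\tau}}(\theta^*)$ have history-dependent success probabilities, and the paper applies the Chernoff bound to their sum directly after lower-bounding the unconditional means, which is not literally a sum of independent Bernoullis; your restriction to the late window $\tau\ge t/2$, the good event $G_t$, and the coupling to i.i.d.\ uniforms turn this into an honest stochastic domination by a $\mathrm{Bin}(n_t,p_t)$ variable, which is the rigorous way to justify the step. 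Second, the paper's proof does not stop at the exponent $1-\alpha$ just below $a_{ii}$: it bootstraps the resulting lower bound on $n_{jt}(\theta^*)$ back into the ratio bound \eqref{eq:ratio_upperbound} to tighten the neighbors' contribution, obtaining a recursion $\boldsymbol{\alpha}(k+1)=A'\boldsymbol{\alpha}(k)$ with $\|A'\|_\infty<1$ that drives $\alpha$ arbitrarily close to $0$. Your argument fully establishes the lemma as stated (which only asks for \emph{some} $\alpha>0$), but the arbitrarily-small-$\alpha$ version produced by the paper's bootstrapping is what is implicitly invoked later (e.g.\ in Lemma~\ref{lemma:recursion1}, where the neighbors' exponents $\alpha_j$ must be negligible for the condition $\beta_i>a_{ii}+\sum_{j\ne i}a_{ij}\beta_j$ to suffice), so if you intend your lemma to plug into the rest of the development you would need to add that iteration on top of your base case.
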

\begin{proof}
Let $n_{it_{0}:t}(\theta^*
) := X_{it_{0}}^{\theta^*} + X_{it_{0}+1}^{\theta^*} + \dots + X_{it}^{\theta^*}$ be the number of times that $\theta^*$ is choosen by her, where $\{X_{\tau}\}_{\tau=t_{0}}^{t}$ are i.i.d. Bernouli random variables defined as $X_{i\tau}^{\theta^*} := \mathbf{1}_{c_{i\tau}}(\theta^*)$. By Lemma \ref{lem:true_state_lowerbound} and law of conditional expectation, for each time step $t$ we have: 
\begin{align*}
    &\Es[X_{t}]\\
    &= \mu_{it}(\theta^*| \mu_{it} > \frac{1}{m(t+1)^{1-a_{ii}}})\cdot\Probs(\mu_{it} > \frac{1}{m(t+1)^{1-a_{ii}}}) \\ 
    &+ \mu_{it}(\theta^*| \mu_{it} \leq \frac{1}{m(t+1)^{1-a_{ii}}})\cdot\Probs(\mu_{it} \leq \frac{1}{m(t+1)^{1-a_{ii}}}) \\
    &\geq \frac{1-m\cdot e^{-\gamma_{i}t}}{m(t+1)^{1-a_{ii}}}.
\end{align*}
So by using the Chernoff bound for $\delta \in (0, 1)$, we have 
\begin{align*}
     &\Probs\left(n_{it_{0}:t}(\theta^*) < (1-\delta)\cdot \sum_{\tau=t_{0}}^{t}\frac{1-m\cdot e^{-\gamma_{i}\tau}}{m(\tau+1)^{1-a_{ii}}}\right)\\
     &\leq \Probs\left(n_{it_{0}:t}(\theta^*) < (1-\delta)\cdot\Es[n_{it_{0}:t}(\theta^*)]\right) \\
    &\leq \exp(-\frac{\delta^2\cdot \Es[n_{it_{0}:t}(\theta^*)]}{2}),
\end{align*}
which implies with probability at least $1-e^{-\delta^2\cdot\Es[n_{it_{0}:t}(\theta^*)]/2}$ we have:
\begin{align*}
    &\frac{n_{it_{0}:t}(\theta^*)}{1-\delta} \geq \sum_{\tau=t_{0}}^{t}\frac{1-m\cdot e^{-\gamma_{i}\tau}}{m(\tau+1)^{1-a_{ii}}} \\
    &\geq \frac{(1-m\cdot e^{-\gamma_{i}t_{0}})}{m}\cdot\underbrace{\sum_{\tau=t_{0}}^{t}\frac{1}{(\tau+1)^{1-a_{ii}}}}_{:=h_{1-a_{ii}}(t_{0}, t)}\\
    &= \frac{(1-m\cdot e^{-\gamma_{i}t_{0}})}{m}\cdot h_{1-a_{ii}}(t_{0}, t)\\
    &\labelrel\geq{ineq:harmonic_sum} \frac{(1-m\cdot e^{-\gamma_{i}t_{0}})}{m}\int_{t_{0}+1}^{t+2}\frac{1}{\tau^{\alpha_{i}}}\mathbf{d}\tau \\
    &\geq \frac{(1-m\cdot e^{-\gamma_{i}t_{0}})}{m}\cdot\frac{(t+2)^{1-\alpha_{i}}-(t_{0}+1)^{1-\alpha_{i}}}{1-\alpha_{i}}  \\
    &\gtrsim c_{i}(t+1)^{1-\alpha_{i}} ,
\end{align*}
where $\alpha_{i} := 1-a_{ii}$, and $0<c_{i} < \frac{(1-m\cdot e^{-\gamma_{i}t_{0}})}{m(1-\alpha_{i})}$. \eqref{ineq:harmonic_sum} follows becuase of the inequliaty $\int_{1}^{N+1}\frac{1}{t^\alpha}\mathbf{d}t \leq \sum_{i={1}}^{N}\frac{1}{i^{\alpha}}$.
It could also be observed that, by choosing $t_{0} \in o(t)$, we have $h(t_{0}, t) \in \mathcal{O}(h(1, t))$, and $h(1, t_{0}) \in o(h(t_{0}, t))$, while we also need to take $t_{0} > \log(m)/\gamma_{i}$.

Hence we can write 
\begin{align}
    \frac{n_{it}(\theta^*)}{1-\delta} &\geq \frac{n_{it_{0}:t}(\theta^*)}{1-\delta} \\
    &\gtrsim c_{i}\cdot (t+1)^{1-\alpha_{i}}. \label{eq:true_parameter_lower_bound}
\end{align}
 By utilizing this, we can improve the inequality in Equation \eqref{eq:ratio_upperbound}; Since at the first place we bounded it at worst case considering that $n_{jt}(\theta^*) = 1$. Rewriting it we will have:
\begin{align}
    &\frac{\mu_{it}(\theta)}{\mu_{it}(\theta^*)} \lesssim \left(\frac{\mu_{it}^{P}(\theta)}{\mu_{it}^{P}(\theta^*)}\right)^{a_{ii}} \times \prod_{j \in \mathcal{N}_{i}}\left(\frac{t+1}{c_{j}\cdot(t+1)^{1-\alpha_{j}}}\right)^{a_{ij}} \nonumber\\
    &\leq \tilde{c}_{i}\left(\frac{\mu_{it}^{P}(\theta)}{\mu_{it}^{P}(\theta^*)}\right)^{a_{ii}}\times(t+1)^{\sum_{j \in \mathcal{N}_{i}}a_{ij}\alpha_{j}}, \label{eq:modified_upperbound}
\end{align}
where $\tilde{c}_{i} = \prod_{j \in \mathcal{N}_{i}}(\frac{1}{c_{j}\cdot(1-\delta)})^{a_{ij}} > 1$. Note that since we are using \eqref{eq:true_parameter_lower_bound} for all the neighbors, by using Union Bound, \ref{eq:modified_upperbound} will hold with probability at least 
\begin{align*}
&1-\sum_{j \in \mathcal{N}_{i}}e^{-c_{i}\delta^{2}(t+1)^{1-\alpha_{i}}/2} 
\end{align*}
It appers the power of $(t+1)$ on the R.H.S of \ref{eq:modified_upperbound} is exhibiting an iterative pattern of $(t+1)^{\alpha_{i}(m)}$ for ($\alpha_{i}(m) : m\in[0, 1, \dots]$). where 
\[
\begin{cases} 
\alpha_{i}(0) = 1-a_{ii} \\
\alpha_{i}(k+1) = \sum_{j\neq i}a_{ij}\alpha_{j}(k) & :\forall k \in [0, 1, \dots]
\end{cases}
\]
writing this down for all users in matrix we get the following matrix form:
\[
\boldsymbol{\alpha}(m+1) = A'\boldsymbol{\alpha}(m),
\]
where $\boldsymbol{\alpha}(m) = [\alpha_{1}(m), \cdots, \alpha_{n}(m)]^{\top}$, and $A'$ is the adjacency matrix with its diagonals equal to zero.
The matrix forms now can be exploited to see 
\[\|\boldsymbol{\alpha}(m+1)\|_{\infty} < \|A'\|_{\infty}\cdot\|\boldsymbol{\alpha}(m)\|_{\infty}< \|A'\|_{\infty}^{m}\|\boldsymbol{\alpha}(0)\|_{\infty},\]
which implies that $\alpha_{i}$s could be made desirably small by increasing $m$ noticing that $\|A'\|_{\infty}<1$. This improves ~\eqref{eq:true_parameter_lower_bound} at the cost of decreasing $c_{i}$s in  
\begin{align*}
&1-\sum_{j \in \mathcal{N}_{i}}e^{-c_{i}\delta^{2}(t+1)^{1-\alpha_{i}}/2} - me^{-\gamma_{i}t} \\
&\gtrsim 1-e^{-(t+1)^{1-\alpha}}
\end{align*}
\end{proof}

\subsection{The frequency of declaring a state $\theta\neq\theta^*$}
\begin{proposition}\label{prop:finite_identifiable_choice}
    Each agent $i \in [n]$, chooses each of her $\theta^*$-identifiable states ($\theta$) finitely many times.
\end{proposition}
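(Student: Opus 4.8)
The plan is to upgrade the exponential-decay estimate of Lemma~\ref{cor:1} into an almost-sure statement via the first Borel--Cantelli lemma. Fix an agent $i\in[n]$ and a $\theta^*$-identifiable state $\theta\neq\theta^*$, and let $X_{it}^{\theta}:=\mathbf{1}_{c_{it}}(\theta)$ be the indicator that agent $i$ declares $\theta$ at time $t$. Since $c_{it}$ is drawn from $\mu_{it}$ conditionally on the $\sigma$-algebra $\mathcal{F}_t$ generated by all private signals and all declared actions up to time $t$ (relative to which $\mu_{it}$ is measurable by \eqref{eq:personal_update}--\eqref{eq:update_rule}), we have $\Es[X_{it}^{\theta}\mid\mathcal{F}_t]=\mu_{it}(\theta)$ and hence $\Probs(c_{it}=\theta)=\Es[X_{it}^{\theta}]=\Es[\mu_{it}(\theta)]$. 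The goal is therefore to show $\sum_{t}\Es[\mu_{it}(\theta)]<\infty$.

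Next I would bound $\Es[\mu_{it}(\theta)]$ directly using Lemma~\ref{cor:1}: choose any $0<\beta_{i\theta}<D_{1}(l_{i}(.|\theta^*)\|l_{i}(.|\theta))$, let $\gamma_{i\theta}>0$ be as provided by that lemma, and split the expectation over the event $E_{it}:=\{\mu_{it}(\theta)\geq e^{-a_{ii}\beta_{i\theta}t}\}$ while using the trivial bound $\mu_{it}(\theta)\leq 1$:
\begin{equation*}
\Es[\mu_{it}(\theta)] = \Es\!\big[\mu_{it}(\theta)\mathbf{1}_{E_{it}}\big] + \Es\!\big[\mu_{it}(\theta)\mathbf{1}_{E_{it}^{c}}\big] \leq \Probs(E_{it}) + e^{-a_{ii}\beta_{i\theta}t} \leq e^{-\gamma_{i\theta}t} + e^{-a_{ii}\beta_{i\theta}t},
\end{equation*}
valid for all sufficiently large $t$. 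Since both terms decay geometrically and the finitely many remaining small-$t$ terms contribute at most $1$ each, $\sum_{t}\Probs(c_{it}=\theta)<\infty$.

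It then follows from the first Borel--Cantelli lemma (no independence of the events $\{c_{it}=\theta\}$ is needed) that $\Probs\big(c_{it}=\theta\text{ for infinitely many }t\big)=0$, i.e. $\Probs$-almost surely agent $i$ declares $\theta$ only finitely many times. Taking a union bound over the finitely many agents $i\in[n]$ and, for each, the finitely many $\theta^*$-identifiable states yields the proposition. I do not expect a genuine obstacle here: Lemma~\ref{cor:1} already does the analytic work, and this proposition is essentially its almost-sure repackaging. The only point deserving care is the measurability/conditioning identity $\Es[X_{it}^{\theta}]=\Es[\mu_{it}(\theta)]$ — making sure the action $c_{it}$ is sampled from $\mu_{it}$ conditionally on a filtration with respect to which $\mu_{it}$ is measurable — so that the expected declaration frequency is exactly the expected belief mass on $\theta$.
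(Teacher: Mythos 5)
Your proof is correct and follows essentially the same route as the paper's: both reduce the claim to showing $\sum_t \Probs(c_{it}=\theta)=\sum_t\Es[\mu_{it}(\theta)]<\infty$ by splitting the expectation over the event $\{\mu_{it}(\theta)\geq e^{-a_{ii}\beta_{i\theta}t}\}$ from Lemma~\ref{cor:1}, bounding it by $e^{-\gamma_{i\theta}t}+e^{-a_{ii}\beta_{i\theta}t}$, and invoking the first Borel--Cantelli lemma. Your write-up is in fact somewhat cleaner on the conditioning identity $\Es[\mathbf{1}_{c_{it}}(\theta)]=\Es[\mu_{it}(\theta)]$, which the paper leaves implicit.
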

\begin{proof}
    We know that the event $\{c_{it}=\theta\}$ (which denotes if user $i$ announces $\theta$ at time $t$) occurs with probability $ p_{it}(\theta) := (\mu_{it}^{S}(\theta)) $. Using Corollary \ref{cor:1}, we can write
    \begin{align*}
        &\sum_{i=1}^{\infty}p_{it}(\theta)  \\ &=\sum_{t=1}^{\infty}\bigg[p_{it}(\theta | p_{it} > e^{-a_{ii}\beta_{i\theta}t})\cdot\Probs(p_{it}>e^{-a_{ii}\beta_{i\theta}t}) \\
        &+ \;p_{it}(\theta | p_{it} \leq e^{-a_{ii}\beta_{i\theta}t})\cdot\Probs(p_{it}<e^{-a_{ii}\beta_{i\theta}t})\bigg] \\
        &\leq \sum_{t=1}^{\infty} \left[e^{-\gamma_{i\theta}t} + e^{-a_{ii}\beta_{i\theta}t}\right]< \infty,
    \end{align*}
    where $c$ is sufficiently large so that for ($t>c$) Lemma \ref{cor:1} holds; Thus by using the Borell-Contelli Theorem, we deduce that the event $\{C_{it} = \theta\}$ happens finitely many times.
\end{proof}
\begin{remark}
    Note that above Proposition deosn't imply uniform boundedness of $n_{it}(\theta)$, meaning that there is a constant that it is smaller than for all times.
\end{remark}

Fix some $\theta\neq\theta^*$. As the next milestone, we aim to carefully construct upper bounds on $n_{it}(\theta)$ of the form
\begin{align}
\label{eq:ub}
n_{it}(\theta)\leq (1+t)^{\beta_i},
\end{align}
with probability at least $1-e^{-\gamma_i \sqrt{t}}$ 
for all $t\geq T_i$, for some $\beta_i,\gamma_i\geq0$ and $T_i\in\mathbb{N}$.
Observing that $n_{it}(\theta)\leq 1+t$ by definition, a trivial choice is $\beta_i=1,~\gamma_i=0,~T_i=1$. It turns out we can do much better. It proves convenient to define the notion of expert agents.
\begin{definition}
    The set of $\theta$-expert agents $\mathcal{J}_{\theta}$ consists of agents in the network who can distinguish $\theta$ from $\theta^*$. The distance of an agent from this set
    is defined as the length of the shortest path connecting her to an agent in this set on the graph associated with A.\footnote{Note that the length of a path here is the number of edges on the path and not the sum of the weights of the edges on it.}     
    For any $i\in [n]$, we also define $\sigma_i$ to be the node immediately proceeding $i$ on the shortest path to this set (if there are multiple shortest paths, we choose one at random).
\end{definition}

Let us start by improving the choice of $(\beta_i,\gamma_i,T_i)$ for $\theta$-expert agents.

\begin{lemma}\label{lem:n_upper_bound}
For any $\theta$-expert agent $i \in [n]$ and any $\beta_i>\frac{3}{4}$, there exist $\gamma_{i} >0$ and $T_{i} \in \mathbb{N}$ such that 
\[
n_{it}(\theta) \leq (t+1)^{\beta_i}\]
with probability at least $1-e^{-\gamma_i \sqrt{t}}$ for $t\geq T_{i}$.
\end{lemma}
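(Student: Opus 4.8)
The plan is to use that $\theta$ is $\theta^{*}$-identifiable for any $\theta$-expert agent $i$, so that Lemma~\ref{cor:1} is available. Since $i\in\mathcal{J}_{\theta}$ we have $D_{1}(l_{i}(\cdot|\theta^{*})\|l_{i}(\cdot|\theta))>0$, so fix any $\beta_{i\theta}\in\big(0,D_{1}(l_{i}(\cdot|\theta^{*})\|l_{i}(\cdot|\theta))\big)$ and let $\gamma_{i\theta}>0$ and $t^{\star}\in\mathbb{N}$ be as guaranteed by Lemma~\ref{cor:1}, so that $\Probs\big(\mu_{i\tau}(\theta)\ge e^{-a_{ii}\beta_{i\theta}\tau}\big)\le e^{-\gamma_{i\theta}\tau}$ for all $\tau\ge t^{\star}$. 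The only other ingredient is that $c_{i\tau}$ is a sample from the already-formed belief $\mu_{i\tau}$, so $\Probs(c_{i\tau}=\theta)=\Es[\mu_{i\tau}(\theta)]$.

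First I would pass from the high-probability belief bound to a bound on the expected belief: splitting the expectation on the event of Lemma~\ref{cor:1} and using $\mu_{i\tau}(\theta)\le 1$ gives $\Es[\mu_{i\tau}(\theta)]\le e^{-a_{ii}\beta_{i\theta}\tau}+e^{-\gamma_{i\theta}\tau}$ for $\tau\ge t^{\star}$, whence $\sum_{\tau\ge s}\Probs(c_{i\tau}=\theta)\le C_{i}\,e^{-\kappa_{i}s}$ for every $s\ge t^{\star}$, where $\kappa_{i}:=\min(a_{ii}\beta_{i\theta},\gamma_{i\theta})>0$ and $C_{i}$ absorbs the two geometric series. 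Next I would use the elementary fact that if $\theta$ is never declared by $i$ on the window $[s,t]$ then $n_{it}(\theta)=n_{i,s-1}(\theta)\le s$; hence, as long as $s\le(t+1)^{\beta_{i}}$, the event $\{n_{it}(\theta)>(t+1)^{\beta_{i}}\}$ is contained in $\{\exists\,\tau\in[s,t]:c_{i\tau}=\theta\}$, and a union bound over $\tau$ yields $\Probs\big(n_{it}(\theta)>(t+1)^{\beta_{i}}\big)\le\sum_{\tau=s}^{t}\Probs(c_{i\tau}=\theta)\le C_{i}e^{-\kappa_{i}s}$. Finally, take $s=\lfloor(t+1)^{\beta_{i}}\rfloor$: for $t$ large enough this lies in $[t^{\star},(t+1)^{\beta_{i}}]$, and since $\beta_{i}>\tfrac{3}{4}>\tfrac{1}{2}$ the quantity $(t+1)^{\beta_{i}}$ dominates $\sqrt{t}$, so $C_{i}e^{-\kappa_{i}s}\le e^{-\gamma_{i}\sqrt{t}}$ for all $t\ge T_{i}$ with, say, $\gamma_{i}:=\kappa_{i}/2$ and $T_{i}$ chosen to absorb $C_{i}$ and $t^{\star}$. (Equivalently one may split at $s=\lceil\sqrt{t}\,\rceil$: the first $\lceil\sqrt{t}\,\rceil$ steps contribute at most $\sqrt{t}+1<(t+1)^{\beta_{i}}$ declarations trivially, and $\sum_{\tau\ge\sqrt{t}}\Es[\mu_{i\tau}(\theta)]\lesssim e^{-\kappa_{i}\sqrt{t}}$ controls the probability of any later declaration.) This is essentially a quantitative sharpening of Proposition~\ref{prop:finite_identifiable_choice}.

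The main obstacle is the dependence structure: $\mu_{i\tau}$ and the actions $\{c_{i\tau'}\}_{\tau'}$ are not independent across time --- the beliefs are driven by the entire past of signals and of the neighbors' declarations --- so a direct Chernoff bound on $n_{it}(\theta)$ is unavailable. The argument above avoids conditioning altogether: it controls only the unconditional quantities $\Es[\mu_{i\tau}(\theta)]$ and replaces concentration by a union bound over $[s,t]$, which suffices because each term is already exponentially small. What remains is bookkeeping --- keeping $T_{i}$ large enough that the window starts beyond the threshold $t^{\star}$ of Lemma~\ref{cor:1} and that the constants are swallowed into $\gamma_{i}$. I also note that the hypothesis $\beta_{i}>\tfrac{3}{4}$ is stronger than this step needs (any $\beta_{i}>\tfrac{1}{2}$ already gives the $\sqrt{t}$-rate); the sharper requirement is presumably what is needed when these exponents are subsequently propagated to agents farther from $\mathcal{J}_{\theta}$.
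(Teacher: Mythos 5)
Your proof is correct, but it takes a genuinely different route from the paper's. The paper models $n_{it}(\theta)=X_1+\cdots+X_t$ as a sum of Bernoulli indicators, bounds $\Es[n_{it}(\theta)]$ by a convergent series (as in Proposition~\ref{prop:finite_identifiable_choice}), and then applies Hoeffding's inequality to get $\Probs\big(n_{it}(\theta)>\Es[n_{it}(\theta)]+t^{\beta_i}\big)\leq \exp(-2t^{2\beta_i-1})$; the requirement $\beta_i>\tfrac{3}{4}$ is exactly what makes $2\beta_i-1>\tfrac12$ and hence yields the $e^{-\gamma_i\sqrt{t}}$ rate. You instead bound the tail sum $\sum_{\tau\geq s}\Probs(c_{i\tau}=\theta)\leq C_ie^{-\kappa_i s}$ via $\Probs(c_{i\tau}=\theta)=\Es[\mu_{i\tau}(\theta)]$, observe that $\{n_{it}(\theta)>(t+1)^{\beta_i}\}$ forces at least one declaration of $\theta$ after time $s=\lfloor(t+1)^{\beta_i}\rfloor$ (or $\lceil\sqrt t\,\rceil$), and finish with a union bound. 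Your route buys two things: it is immune to the dependence of the $X_\tau$'s across time (the paper's invocation of Hoeffding implicitly treats them as independent, which is not justified without a martingale/Azuma-type argument, a point you correctly flag), and it shows the threshold $\tfrac34$ is an artifact of the concentration step --- any $\beta_i>\tfrac12$ suffices for the $\sqrt t$-rate, and indeed your argument essentially gives $n_{it}(\theta)\lesssim\sqrt t$. What the paper's approach buys is a template (expectation bound plus Hoeffding) that it reuses verbatim in Lemma~\ref{lemma:recursion1}, where the expected count genuinely grows polynomially and a pure union bound over single declarations would no longer suffice. Your observation that $\tfrac34$ is only needed downstream, when the exponents are propagated through the recursion $\beta_i>a_{ii}+\sum_{j\neq i}a_{ij}\beta_j$, is consistent with how the paper uses the lemma.
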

\begin{proof}
    Using conditional expectation, and constructing a model, based on Bernoulli random variables to denote each time action $\theta$ is taken, we have $n_{it}(\theta) = X_{1} + \cdots + X_{t}$, using the same argument as in the proof of Proposition \ref{prop:finite_identifiable_choice}, we get $\Es[n_{it}(\theta)] \leq c +  \sum_{\tau=c+1}^{t} \left[e^{-\gamma_{i\theta}\tau} + e^{-a_{ii}\beta_{i\theta}\tau}\right] < \infty$, and by using the Hoeffding inequality \cite{Hoeffding} for bounded random variables we have we have
\begin{align*}
     &\Probs\left(n_{it}(\theta) > (\sum_{\tau=1}^{t}[e^{-\gamma_{i\theta}\tau}+e^{-a_{ii}\beta_{i\theta}\tau}]) +  t^{\beta_i}\right)\\
     &\leq \Probs\bigg(n_{it}(\theta) > \Es\left[n_{jt}(\theta)\right] + t^{\beta_i}\bigg) \\
    &\leq \exp(-2t^{2\beta_i-1}) \lesssim \exp(-\gamma_{i}\sqrt{t})
\end{align*}
for some choice of $\gamma_i$, which implies with probability $1-e^{-\gamma_i \sqrt{t}}$ we have $n_{it}(\theta) \lesssim (t+1)^{\beta_{i}}$
\end{proof}

The following result enables us to come up with improved upper bounds of the form \eqref{eq:ub} for an agent exploiting the potentially improved bounds  of her neighbors.
\begin{lemma}
\label{lemma:recursion1}
Consider agent $i\notin\mathcal{J}_{\theta}$ and assume that her neighbors $j\in\mathcal{N}_i$ satisfy
\begin{align}
n_{jt}(\theta)\leq (1+t)^{\beta_j},
\end{align}
with probability at least $1-e^{-\gamma_j \sqrt{t}}$ 
for all $t\geq T_j$, for some $\{(\beta_j,\gamma_j,T_j)\}_{j\in\mathcal{N}_i}$.
Choose any $\beta_i>  a_{ii}+\sum_{j\neq i} a_{ij}\beta_j$, then there exists $\gamma_i>0$ and $T_i\in\mathbb{N}$ such that
\begin{align}
n_{it}(\theta)\leq (1+t)^{\beta_i},
\end{align}
with probability at least $1-e^{-\gamma_i \sqrt{t}}$ 
for all $t\geq T_i$.
\end{lemma}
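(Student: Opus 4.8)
The strategy mirrors the proofs of Lemma~\ref{lem:lower_bound_star} and Lemma~\ref{lem:n_upper_bound}: first control the \emph{belief} $\mu_{it}(\theta)$ using the neighbors' hypotheses, then convert that into a count bound via a martingale concentration inequality. The first step is to exploit $i\notin\mathcal{J}_\theta$: since $\theta$ is not $\theta^*$-identifiable for $i$ we have $l_i(\cdot|\theta)=l_i(\cdot|\theta^*)$, hence $\mu^P_{it}(\theta)=\mu^P_{it}(\theta^*)$ for all $t$, so the private-belief factor in \eqref{eq:KOO} drops out, leaving for every $s$
\[
\mu_{is}(\theta)\ \le\ \frac{\mu_{is}(\theta)}{\mu_{is}(\theta^*)}\ =\ \prod_{j\in\mathcal{N}_i}\Big(\tfrac{n_{js}(\theta)}{n_{js}(\theta^*)}\Big)^{a_{ij}}.
\]
Into this I plug two estimates. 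For the numerator I use \emph{monotonicity} of $s\mapsto n_{js}(\theta)$: on the single event $\mathcal{A}_t=\{\,n_{jt}(\theta)\le(1+t)^{\beta_j}\ \forall j\in\mathcal{N}_i\,\}$, whose probability is at least $1-\sum_{j\in\mathcal{N}_i}e^{-\gamma_j\sqrt t}\ge 1-e^{-\gamma'\sqrt t}$ for large $t$, we automatically get $n_{js}(\theta)\le(1+t)^{\beta_j}$ for \emph{all} $s\le t$. For the denominator I apply Lemma~\ref{lem:lower_bound_star} to each neighbor, $n_{js}(\theta^*)\ge c_j(1+s)^{1-\alpha_j}$ with probability $\ge 1-e^{-(1+s)^{1-\alpha_j}}$, each $\alpha_j>0$ chosen as small as we wish; since these failure probabilities decay almost exponentially in $s$, a union bound over $s\in[s_0,t]$ keeps the total failure probability below $e^{-\gamma''\sqrt t}$ provided $s_0$ is a suitably large (but still sub-linear) power of $t$. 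On the intersection $\mathcal{G}_t$ of these events, $\Prob(\mathcal{G}_t^c)\le e^{-\gamma_i\sqrt t}$, we obtain $\mu_{is}(\theta)\le C\,(1+t)^{\sum_j a_{ij}\beta_j}(1+s)^{-p}$ for $s\in[s_0,t]$, with $p:=\sum_j a_{ij}(1-\alpha_j)\in(0,1)$.

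Next I would bound the expected count. Writing $n_{it}(\theta)=1+\sum_{s=1}^{t}X_s$ with $X_s=\mathbf{1}_{c_{is}}(\theta)$ conditionally $\mathrm{Bernoulli}(\mu_{is}(\theta))$ (as in the proof of Proposition~\ref{prop:finite_identifiable_choice}), splitting the sum at $s_0$, bounding $\mu_{is}(\theta)\le 1$ for $s\le s_0$, and using $\sum_{s\le t}(1+s)^{-p}\asymp(1+t)^{1-p}$, gives on $\mathcal{G}_t$
\[
\sum_{s=1}^{t}\mu_{is}(\theta)\ \lesssim\ s_0+(1+t)^{\sum_j a_{ij}\beta_j+1-p}\ =\ s_0+(1+t)^{\,a_{ii}+\sum_j a_{ij}\beta_j+\sum_j a_{ij}\alpha_j}\ =:\ s_0+(1+t)^{\beta_i^{\flat}}.
\]
The hypothesis $\beta_i>a_{ii}+\sum_{j\neq i}a_{ij}\beta_j$ leaves a strictly positive slack, so taking every $\alpha_j$ small enough forces $\beta_i^{\flat}<\beta_i$ (and $s_0\ll(1+t)^{\beta_i^\flat}$); note that the coarse choice $\alpha_j=1-a_{jj}$ from the base case of Lemma~\ref{lem:lower_bound_star} would \emph{not} work, which is precisely why its ``arbitrarily small $\alpha$'' refinement is needed here.

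Finally, conditioning on the beliefs, $n_{it}(\theta)-1=\sum_s X_s$ is a sum of $[0,1]$-valued increments with conditional mean $\sum_s\mu_{is}(\theta)$ and conditional variance at most the same quantity, which on $\mathcal{G}_t$ is $\lesssim(1+t)^{\beta_i^{\flat}}$. A Bernstein/Freedman-type martingale tail bound, with deviation $u=(1+t)^{\beta_i}-1-(1+t)^{\beta_i^\flat}\asymp(1+t)^{\beta_i}$ dominating both the centering and the variance proxy (using $\beta_i>\beta_i^{\flat}$), then yields
\[
\Probs\big(n_{it}(\theta)>(1+t)^{\beta_i}\big)\ \le\ \Prob(\mathcal{G}_t^c)+\exp\!\big(-c\,(1+t)^{\beta_i}\big)\ \le\ e^{-\gamma_i\sqrt t}
\]
for all $t\ge T_i$, after shrinking $\gamma_i$ and using $(1+t)^{\beta_i}\ge\sqrt t$ eventually.

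The hard part is the probability bookkeeping in the first step: the neighbors' per-time guarantees decay only like $e^{-\gamma_j\sqrt s}$ — too slowly to survive a union bound over a range of $s$ reaching down to constants — so one is forced to use monotonicity of $n_{js}(\theta)$ to invoke them at the terminal time $t$ only, while the $\theta^*$-frequency lower bounds of Lemma~\ref{lem:lower_bound_star}, having near-exponential-in-$s$ failure probabilities, can absorb a union bound over $s\in[s_0,t]$. Choosing $s_0$ so that it is both negligible compared with $(1+t)^{\beta_i^\flat}$ and large enough that $\sum_{s\ge s_0}e^{-(1+s)^{1-\alpha_j}}\le e^{-\gamma_i\sqrt t}$ is the delicate balance — comfortably achievable in the intended recursive application, where all exponents stay above $\tfrac12$.
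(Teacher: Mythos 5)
Your proposal follows essentially the same route as the paper's proof: bound $\mu_{is}(\theta)$ through the KOO product using the neighbors' count hypotheses together with the $n_{js}(\theta^*)\gtrsim (1+s)^{1-\alpha_j}$ lower bound of Lemma~\ref{lem:lower_bound_star} (the private-belief factor dropping out since $i\notin\mathcal{J}_\theta$), sum over time to get $\Es[n_{it}(\theta)]\lesssim (1+t)^{a_{ii}+\sum_{j\neq i}a_{ij}\beta_j+\sum_{j\neq i}a_{ij}\alpha_j}$ with $\alpha_j$ taken small enough to sit below $\beta_i$, and then concentrate. Your bookkeeping is if anything more careful than the paper's --- you invoke a martingale (Freedman/Bernstein) tail bound where the paper applies Hoeffding to conditionally dependent indicators, and you handle the per-time failure probabilities via monotonicity of $n_{js}(\theta)$ and an explicit good event rather than the paper's term-by-term law of conditional expectation --- but the decomposition, the key lemmas used, and the exponent arithmetic are identical.
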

\begin{proof}
We have:
\begin{align*}
    \frac{\mu_{it}(\theta)}{\mu_{it}(\theta^*)} &= \left(\frac{\mu_{it}^{P}(\theta)}{\mu_{it}^{P}(\theta^*)}\right)^{a_{ii}}\times \prod_{j \in \mathcal{N}_{i}} \left(\frac{\hat{\mu}_{jt}(\theta)}{\hat{\mu}_{jt}(\theta^*)}\right)^{a_{ij}} \\
    &\lesssim \tilde{c}_{i}  \prod_{j \in \mathcal{N}_{i}}  \left(\frac{(t+1)^{\beta_{j}}}{(t+1)^{1-\alpha_{j}}}\right)^{a_{ij}} \\
    &\leq\tilde{c}_{i}\times (t+1)^{(\sum_{j \in \mathcal{N}_{i}}a_{ij}(\beta_{j} - (1-\alpha_{i}^*)))}
\end{align*}
Which implies:
\begin{align*}
    {\mu_{it}(\theta)}
    \lesssim\tilde{c}_{i}\times (t+1)^{(\sum_{j \in \mathcal{N}_{i}}a_{ij}(\beta_{j} - (1-\alpha_{i}^*)))},
\end{align*}
with probability at least $p_t:=1- \sum_{j \in \mathcal{N}_{i}}(e^{-\gamma_{j}\sqrt{t}} + e^{-b_{j}\delta^{2}(t+1)^{1-\alpha_{j}}/2})$. So by law of conditional expectation we will have:
\begin{align*}
   &\Es\left[n_{it}(\theta)\right]\leq c\\
    &+\tilde{c}_{i}\times \sum_{\tau=c+1}^{t}(\tau+1)^{(\sum_{j \in \mathcal{N}_{i}}a_{ij}(\beta_{j} - (1-\alpha_{i}^*)))}\times p_\tau+(1-p_\tau) \\
    & \leq \tilde{c} + (t+1)^{(\sum_{j \in \mathcal{N}_{i}}a_{ij}(\beta_{j} - (1-\alpha_{i}^*))) + 1} \lesssim (t+1)^{\beta_{i}},
\end{align*}
where $\beta_{i}$ is chosen to satisfy $\beta_{i} > a_{ii} + \sum_{j \in \mathcal{N}_{i}}a_{ij}\beta_{j}$.
Using Hoeffding inequliaty we get:
\begin{align*}
     &\Probs\left(n_{it}(\theta) \gtrsim (t+1)^{\beta_{i}}\right) \leq \Probs\bigg(n_{it}(\theta) > c \\
     &+ \tilde{c}_{i}\sum_{\tau=c+1}^{t}(t+1)^{(\sum_{j \in \mathcal{N}_{i}}a_{ij}(\beta_{j} - (1-\alpha_{i}^*)))} +  t^{\beta_{i}}\bigg)\\
     &\leq \Probs\bigg(n_{it}(\theta) > \Es\left[n_{jt}(\theta)\right] + t^{\beta_{i}}\bigg) \\
    &\leq \exp(-2t^{2\beta_{i} - 1}) \lesssim \exp(-\gamma_{i}\sqrt{t})
\end{align*}
which proves the claim.
\end{proof}
Let us illustrate how one can use the above lemma to come up with non-trivial bounds of the form \eqref{eq:ub} for an agent $i$ with $\rm{dist}(\mathcal{J}_\theta,i)=1$. From definition, we have $\rm{dist}(\mathcal{J}_\theta,\sigma_i)=0$ ($\sigma_i$ is the neighbor of $i$ that is a $\theta$-expert). It thus follows from Lemma~\ref{lem:n_upper_bound} that for any choice of $\beta_{\sigma_i}>\frac{3}{4}$ there exists $\gamma_{\sigma_i}\geq0$ and $T_{\sigma_i}\in\mathbb{N}$ satisfying the bound of the form \eqref{eq:ub} for agent $\sigma_i$. Let us use the trivial triplet $(\beta,\gamma,T) = (1,0,1)$ for the rest of the neighbors of agent $i$. The condition on $\beta_i$ from the above lemma then becomes:
\[\beta_i>a_{ii}+\sum_{j\neq i, \sigma_i} a_{ij}+a_{i\sigma_i}\beta_{\sigma_i} = 1-a_{i\sigma_i}(1-\beta_{\sigma_i})\]
Recalling that any number greater than $\bar\beta^0:=\frac{3}{4}$ is a feasible choice for $\beta_{\sigma_i}$, the possible choices for $\beta_i$ becomes any $\beta_i>1-\frac{a_{i\sigma_i}}{4}$. 
Now, let 
\[\bar\beta^1:=\max_{i:\rm{dist}(\mathcal{J}_\theta,i)=1}1-\frac{a_{i\sigma_i}}{4}.\]
Then, for any agent $i$ at distance 1 from $\mathcal{J}_\theta$, any $\beta_i>\bar\beta^1$ is a feasible choice for a bound of the form \eqref{eq:ub}. Notice that this is a non-trivial bound since $\bar\beta^1<1$. 
Recursively applying the above argument, we can construct non-trivial bounds of the form \eqref{eq:ub} for agents at any distance from $\mathcal{J}_\theta$, as established in the next lemma.

\begin{lemma}\label{full_proof}
Let $h:=\max_{i \in[n]}\rm{dist}(\mathcal{J}_\theta, i)$. 
Consider the sequence $\{\bar\beta^l\}_{l=0}^{h}$ defined by the recursion 
\begin{align}
\bar{\beta}^{\ell+1} = \max_{i:\rm{dist}(\mathcal{J}_{\theta}, i)=\ell+1} 1-a_{i\sigma_i}(1-\bar{\beta}^{\ell}),
\end{align}
with $\bar\beta^0=\frac{3}{4}$. Then,\\
i) $\bar{\beta}^{\ell}<1$ for $l=0,1,\ldots,h$.\\
ii) For any $i\in[n]$ and $\beta_i>\bar\beta^{\rm{dist}(\mathcal{J}_{\theta}, i)}$, there exists $\gamma_i\geq0$ and $T_i\in\mathbb{N}$ such that
\begin{align}
n_{it}(\theta)\leq (1+t)^{\beta_i},
\end{align}
with probability at least $1-e^{-\gamma_i \sqrt{t}}$
for all $t\geq T_i$.
\end{lemma}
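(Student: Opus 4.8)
### Proof proposal for Lemma~\ref{full_proof}

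The plan is to prove both parts simultaneously by induction on the distance level $\ell$, using the recursive machinery already assembled: Lemma~\ref{lem:n_upper_bound} supplies the base case for $\theta$-expert agents (distance $0$), and Lemma~\ref{lemma:recursion1} provides the inductive step that propagates a bound of the form \eqref{eq:ub} from the neighbors of an agent to the agent herself. So the real content is organizing the bookkeeping of the exponents $\bar\beta^\ell$ and checking that the recursion keeps them strictly below $1$.

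First I would establish part (i). The recursion $\bar\beta^{\ell+1}=\max_{i:\mathrm{dist}(\mathcal{J}_\theta,i)=\ell+1} 1-a_{i\sigma_i}(1-\bar\beta^\ell)$ has the form $\bar\beta^{\ell+1}=1-a_{\min}^{(\ell+1)}(1-\bar\beta^\ell)$ where each relevant weight $a_{i\sigma_i}$ is strictly positive (it is an edge weight on an actual edge of the graph, since $\sigma_i$ is $i$'s successor on a shortest path to $\mathcal{J}_\theta$). Since $\bar\beta^0=\tfrac34<1$, an easy induction gives $1-\bar\beta^{\ell+1}=a_{i^\star\sigma_{i^\star}}(1-\bar\beta^\ell)>0$ for the maximizing agent $i^\star$, hence $\bar\beta^{\ell+1}<1$; in fact one gets the quantitative bound $1-\bar\beta^\ell \ge \big(\min_{i} a_{i\sigma_i}\big)^\ell\cdot\tfrac14>0$, which also shows the gap can shrink but never vanish over the finite range $\ell=0,\dots,h$.

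Next, part (ii) by induction on $\ell=\mathrm{dist}(\mathcal{J}_\theta,i)$. \emph{Base case $\ell=0$:} here $i\in\mathcal{J}_\theta$ is a $\theta$-expert, and Lemma~\ref{lem:n_upper_bound} says exactly that any $\beta_i>\tfrac34=\bar\beta^0$ is feasible, with suitable $\gamma_i>0$ and $T_i$. \emph{Inductive step:} suppose the claim holds for all agents at distance $\le\ell$. Take $i$ with $\mathrm{dist}(\mathcal{J}_\theta,i)=\ell+1$. Its successor $\sigma_i$ is at distance $\ell$, so by the inductive hypothesis, for any $\beta_{\sigma_i}>\bar\beta^\ell$ there is a valid triplet $(\beta_{\sigma_i},\gamma_{\sigma_i},T_{\sigma_i})$ for $\sigma_i$. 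For every other neighbor $j\in\mathcal{N}_i\setminus\{\sigma_i\}$ we use the trivial triplet $(\beta_j,\gamma_j,T_j)=(1,0,1)$ guaranteed unconditionally by $n_{jt}(\theta)\le 1+t$. Now invoke Lemma~\ref{lemma:recursion1}: the hypothesis is met for all neighbors, so any $\beta_i>a_{ii}+\sum_{j\ne i}a_{ij}\beta_j = a_{ii}+\sum_{j\ne i,\sigma_i}a_{ij}+a_{i\sigma_i}\beta_{\sigma_i} = 1-a_{i\sigma_i}(1-\beta_{\sigma_i})$ yields a valid triplet $(\beta_i,\gamma_i,T_i)$ for $i$. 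Letting $\beta_{\sigma_i}\downarrow\bar\beta^\ell$, the infimum of feasible $\beta_i$ is $1-a_{i\sigma_i}(1-\bar\beta^\ell)$; taking the maximum over all agents $i$ at distance $\ell+1$ recovers exactly $\bar\beta^{\ell+1}$, so any $\beta_i>\bar\beta^{\ell+1}$ is feasible, completing the induction.

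The main obstacle — really the only delicate point — is making the quantifier order in the induction airtight: Lemma~\ref{lemma:recursion1} requires the neighbor bounds to be \emph{fixed} before choosing $\beta_i$, so one cannot literally set $\beta_{\sigma_i}=\bar\beta^\ell$; instead, given a target $\beta_i>\bar\beta^{\ell+1}$, one must first pick an intermediate $\beta_{\sigma_i}\in(\bar\beta^\ell, \cdot)$ small enough that $1-a_{i\sigma_i}(1-\beta_{\sigma_i})<\beta_i$ (possible since $1-a_{i\sigma_i}(1-\bar\beta^\ell)\le\bar\beta^{\ell+1}<\beta_i$ and the map $\beta_{\sigma_i}\mapsto 1-a_{i\sigma_i}(1-\beta_{\sigma_i})$ is continuous), then apply the inductive hypothesis at that $\beta_{\sigma_i}$, then apply Lemma~\ref{lemma:recursion1}. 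I would also note that strong connectivity is what guarantees $h<\infty$ and that $\sigma_i$ (hence the positive weight $a_{i\sigma_i}$) is well-defined for every $i\notin\mathcal{J}_\theta$, and that $\mathcal{J}_\theta\ne\emptyset$ by collective distinguishability — so the recursion is well-posed over the full finite range $\ell=0,\dots,h$. Finally, the various $\gamma_i,T_i$ are obtained constructively at each step and there are finitely many agents, so no uniformity issue arises.
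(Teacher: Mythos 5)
Your proof is correct and follows essentially the same route as the paper: induction on $\mathrm{dist}(\mathcal{J}_\theta,i)$, with Lemma~\ref{lem:n_upper_bound} as the base case, Lemma~\ref{lemma:recursion1} applied with the trivial triplet $(1,0,1)$ for the neighbors off the shortest path as the inductive step, and positivity of $a_{i\sigma_i}$ giving $\bar\beta^\ell<1$. Your handling of the quantifier order (choosing an intermediate $\beta_{\sigma_i}>\bar\beta^\ell$ rather than setting $\beta_{\sigma_i}=\bar\beta^\ell$) is in fact slightly more careful than the paper's own one-line proof.
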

\begin{proof}
    Consider $i\in[n]$ with $\rm{dist}(\mathcal{J}_{\theta}, i)=\ell+1$.
    The proof is then the same as the argument made above the lemma, using the bound of the form in~\eqref{eq:ub} with $\beta_{\sigma_i} = \bar\beta^l$, and the trivial bounds (1,0,1) for the rest of the nodes.
\end{proof}
\begin{theorem}
    Within a strongly connected network of agents obeying the belief update rules governed by \eqref{eq:empiricals}-\eqref{eq:update_rule}, and assuming that for any $\theta \neq \theta^*$ there exists at least one $\theta$-expert agent, all the agents learn the true state $\theta^*$ with probability one in the sense that for all $i\in[n]$
\[\Probs\left(\lim\limits_{t \to \infty}\mu_{it}(\theta^*)=1\right) = 1\]
\end{theorem}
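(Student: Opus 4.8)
The plan is to fix an arbitrary $\theta\neq\theta^*$ and an arbitrary agent $i\in[n]$, prove that $\mu_{it}(\theta)\to 0$ $\Probs$-almost surely, and then conclude via a union bound over the finitely many pairs $(i,\theta)$ together with $\sum_{\theta\in\Theta}\mu_{it}(\theta)=1$ and $|\Theta|=m<\infty$. The workhorse is the ratio identity \eqref{eq:KOO}: since $\theta^*$ is one of the $m$ terms in the normalizing sum of \eqref{eq:update_rule}, and since $\hat\mu_{jt}(\theta)/\hat\mu_{jt}(\theta^*)=n_{jt}(\theta)/n_{jt}(\theta^*)$ by \eqref{eq:empiricals}, we have the deterministic bound
\[
\mu_{it}(\theta)\ \le\ \frac{\mu_{it}(\theta)}{\mu_{it}(\theta^*)}\ =\ \left(\frac{\mu_{it}^{P}(\theta)}{\mu_{it}^{P}(\theta^*)}\right)^{a_{ii}}\prod_{j\in\mathcal{N}_i}\left(\frac{n_{jt}(\theta)}{n_{jt}(\theta^*)}\right)^{a_{ij}}.
\]
It therefore suffices to control the three factors on the right: the private ratio, the neighbors' wrong-state counts $n_{jt}(\theta)$, and the neighbors' true-state counts $n_{jt}(\theta^*)$.

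For the private ratio: if $\theta$ is $\theta^*$-identifiable for $i$ (that is, $i\in\mathcal{J}_\theta$), then Lemma~\ref{lemma:exponential_fast_convergence} gives $(\mu_{it}^{P}(\theta)/\mu_{it}^{P}(\theta^*))^{a_{ii}}\le e^{-a_{ii}\beta_{i\theta}t}\le 1$ outside an event of probability at most $e^{-\gamma_{i\theta}t}$; if $\theta$ is not $\theta^*$-identifiable for $i$, then $l_i(\cdot\mid\theta)=l_i(\cdot\mid\theta^*)$, so by the Bayes update \eqref{eq:personal_update} and the uniform initialization this ratio equals $1$ for all $t$ deterministically. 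Hence this factor is $\le 1$ with probability at least $1-e^{-\gamma_{i\theta}t}$ in all cases. For the wrong-state counts of the neighbors: strong connectivity together with the assumption that $\mathcal{J}_\theta\neq\emptyset$ ensures every agent is at finite distance from $\mathcal{J}_\theta$, so Lemma~\ref{full_proof} applies and yields, for each $j\in\mathcal{N}_i$, a triple $(\beta_j,\gamma_j,T_j)$ with $\beta_j<1$ such that $n_{jt}(\theta)\le(1+t)^{\beta_j}$ for all $t\ge T_j$ with probability at least $1-e^{-\gamma_j\sqrt t}$. For the true-state counts: by Lemma~\ref{lem:lower_bound_star} — whose proof in fact allows the parameter $\alpha$ there to be taken as small as we wish, at the price of a larger threshold — we may fix for each $j$ a small $\alpha_j>0$ with $\alpha_j<1-\beta_j$ and have $n_{jt}(\theta^*)>(1+t)^{1-\alpha_j}$ for all large $t$ with probability at least $1-e^{-(1+t)^{1-\alpha_j}}$.

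Combining these three estimates via a union bound over $\mathcal{N}_i$, we get, for all sufficiently large $t$,
\[
\mu_{it}(\theta)\ \le\ \prod_{j\in\mathcal{N}_i}\left(\frac{(1+t)^{\beta_j}}{(1+t)^{1-\alpha_j}}\right)^{a_{ij}}\ =\ (1+t)^{-\epsilon_i},\qquad \epsilon_i:=\sum_{j\in\mathcal{N}_i}a_{ij}\bigl(1-\alpha_j-\beta_j\bigr),
\]
with failure probability at most $e^{-\gamma_{i\theta}t}+\sum_{j\in\mathcal{N}_i}\bigl(e^{-\gamma_j\sqrt t}+e^{-(1+t)^{1-\alpha_j}}\bigr)$. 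Because $\beta_j<1$ by Lemma~\ref{full_proof}(i) and $\alpha_j<1-\beta_j$, every summand in $\epsilon_i$ is strictly positive, and each $a_{ij}>0$ for $j\in\mathcal{N}_i$, so $\epsilon_i>0$ as soon as $\mathcal{N}_i\neq\emptyset$ — which holds for every agent in a strongly connected network with $n\ge2$ (the degenerate case $n=1$ is immediate from Lemma~\ref{cor:1}, since collective distinguishability then makes every $\theta\neq\theta^*$ identifiable for the lone agent). The displayed failure probability is summable in $t$, so by Borel--Cantelli, $\Probs$-almost surely $\mu_{it}(\theta)\le(1+t)^{-\epsilon_i}$ for all large $t$, hence $\mu_{it}(\theta)\to0$. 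Intersecting over the finitely many pairs $(i,\theta)$ with $i\in[n]$ and $\theta\in\Theta\setminus\{\theta^*\}$, we conclude that $\Probs$-almost surely $\mu_{it}(\theta)\to0$ for every agent and every wrong state, and therefore $\mu_{it}(\theta^*)=1-\sum_{\theta\neq\theta^*}\mu_{it}(\theta)\to1$ for every $i$, which is the claim.

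I expect the only genuine subtlety to be the joint choice of the parameters $\{(\beta_j,\alpha_j)\}_j$: the $\beta_j$'s produced by the recursion of Lemma~\ref{full_proof} themselves depend, through Lemma~\ref{lemma:recursion1}, on the $\alpha$-exponents used for the neighbors' true-state counts, yet the final step needs $\alpha_j<1-\beta_j$. This is resolved by first fixing all $\alpha_j$ small enough (which only enlarges thresholds and relaxes the condition on the $\beta_j$'s), then running the Lemma~\ref{full_proof} recursion to obtain $\beta_j<1$, and finally, if necessary, shrinking the $\alpha_j$'s once more below $1-\beta_j$ — a monotone operation that preserves every previously established bound. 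Everything else is assembly: the entire point of Lemmas~\ref{prop:finite_identifiable_choice}--\ref{full_proof} was to force the exponent $\beta_j$ strictly below $1$, which, set against a true-state count growing almost linearly by Lemma~\ref{lem:lower_bound_star}, is exactly what makes the net exponent $-\epsilon_i$ negative and drives the wrong-state beliefs to zero.
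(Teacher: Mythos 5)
Your proof is correct and follows essentially the same route as the paper's: bounding $\mu_{it}(\theta)$ by the ratio $\mu_{it}(\theta)/\mu_{it}(\theta^*)$ from \eqref{eq:KOO}, combining the sublinear wrong-state counts of Lemma~\ref{full_proof} with the near-linear true-state counts of Lemma~\ref{lem:lower_bound_star} to get a negative net exponent, and invoking Borel--Cantelli. You are in fact more explicit than the paper on two points it leaves implicit --- the summability argument turning the high-probability bounds into almost-sure convergence, and the joint selection of the $\alpha_j$ and $\beta_j$ exponents to ensure $\alpha_j<1-\beta_j$ --- which is a welcome tightening rather than a departure.
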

\begin{proof}
Equivalently, we may show that for any $\theta\neq\theta^*$:
\[\Probs\left(\lim\limits_{t \to \infty}\mu_{it}(\theta)=0\right) = 1\]
We consider two cases:
    \begin{enumerate}
        \item if agent $i$ can distinguish $\theta$ from $\theta^*$ the proof follows from Lemma~\ref{cor:1}.
        \item if $i\notin \mathcal{J}_\theta$, then the proof follows from the observation that
        for any $\tilde\beta_i>a_{ii}+\sum_{j\neq i}a_{ij}\bar\beta^{\rm{dist}(\mathcal{J}_\theta,j)}$, there exists $\gamma_i>0$ and $T_i\in\mathbb{N}$ such that with probability at least $1-e^{-\gamma_i \sqrt{t}}$ we have
        \begin{align}
        \mu_{it}(\theta)\leq (t+1)^{\tilde\beta_i-1},
        \end{align}
        for all $t\geq T_i$. The proof immediately follows noticing that  $a_{ii}+\sum_{j\neq i}a_{ij}\bar\beta^{\rm{dist}(\mathcal{J}_\theta,j)}<\sum_{j=1}^{n}a_{ij}=1$.       
    \end{enumerate} 
\end{proof}

\bibliographystyle{resources/IEEEbib}
\bibliography{IEEEabrv,root}
\end{document}